
\documentclass{article}

\usepackage{times}
\usepackage{graphicx} 
\usepackage{subfigure}

\usepackage{natbib}

\usepackage{algorithm}
\usepackage{algorithmic}
\usepackage[algo2e,linesnumbered,lined,boxed,vlined]{algorithm2e}

\usepackage{amsfonts}
\usepackage{amssymb}

\usepackage{hyperref}


\usepackage[accepted]{icml2016}


\usepackage{comment}
\usepackage{amsthm}
\usepackage{mathtools}
\usepackage{tabularx}
\usepackage{multirow}

\newtheorem{lemma}{Lemma}
\newtheorem{proposition}{Proposition}
\newtheorem{corollary}{Corollary}


\def\b{\ensuremath\boldsymbol}

\setlength{\parskip}{2pt}

\usepackage{lastpage}
\usepackage{fancyhdr}
\fancyhead[R]{\thepage}

\usepackage{url}
\Urlmuskip=0mu plus 1mu  

\icmltitlerunning{Restricted Boltzmann Machine and Deep Belief Network: Tutorial and Survey}

\begin{document}

\AddToShipoutPictureBG*{%
  \AtPageUpperLeft{%
    \setlength\unitlength{1in}%
    \hspace*{\dimexpr0.5\paperwidth\relax}
    \makebox(0,-0.75)[c]{\normalsize {\color{black} To appear as a part of an upcoming textbook on dimensionality reduction and manifold learning.}}
    }}

\twocolumn[
\icmltitle{Restricted Boltzmann Machine and Deep Belief Network: Tutorial and Survey}

\icmlauthor{Benyamin Ghojogh}{bghojogh@uwaterloo.ca}
\icmladdress{Department of Electrical and Computer Engineering, 
\\Machine Learning Laboratory, University of Waterloo, Waterloo, ON, Canada}
\icmlauthor{Ali Ghodsi}{ali.ghodsi@uwaterloo.ca}
\icmladdress{Department of Statistics and Actuarial Science \& David R. Cheriton School of Computer Science, 
\\Data Analytics Laboratory, University of Waterloo, Waterloo, ON, Canada}
\icmlauthor{Fakhri Karray}{karray@uwaterloo.ca}
\icmladdress{Department of Electrical and Computer Engineering, 
\\Centre for Pattern Analysis and Machine Intelligence, University of Waterloo, Waterloo, ON, Canada}
\icmlauthor{Mark Crowley}{mcrowley@uwaterloo.ca}
\icmladdress{Department of Electrical and Computer Engineering, 
\\Machine Learning Laboratory, University of Waterloo, Waterloo, ON, Canada}

\icmlkeywords{Tutorial}

\vskip 0.3in
]

\begin{abstract}
This is a tutorial and survey paper on Boltzmann Machine (BM), Restricted Boltzmann Machine (RBM), and Deep Belief Network (DBN). We start with the required background on probabilistic graphical models, Markov random field, Gibbs sampling, statistical physics, Ising model, and the Hopfield network. Then, we introduce the structures of BM and RBM. The conditional distributions of visible and hidden variables, Gibbs sampling in RBM for generating variables, training BM and RBM by maximum likelihood estimation, and contrastive divergence are explained. Then, we discuss different possible discrete and continuous distributions for the variables. We introduce conditional RBM and how it is trained. Finally, we explain deep belief network as a stack of RBM models. This paper on Boltzmann machines can be useful in various fields including data science, statistics, neural computation, and statistical physics. 
\end{abstract}

\section{Introduction}

Centuries ago, the Boltzmann distribution \cite{boltzmann1868studien}, also called the Gibbs distribution \cite{gibbs1902elementary}, was proposed. This energy-based distribution was found to be useful for modeling the physical systems statistically \cite{huang1987statistical}. 
One of these systems was the Ising model which modeled interacting particles with binary spins \cite{lenz1920beitrvsge,ising1925beitrag}. Later, the Ising model was found to be able to be a neural network \cite{little1974existence}. Hence, Hopfield network was proposed which modeled an Ising model in a network for modeling memory \cite{hopfield1982neural}. 
Inspired by the Hopfield network \cite{little1974existence,hopfield1982neural}, which was itself inspired by the physical Ising model \cite{lenz1920beitrvsge,ising1925beitrag}, Hinton et. al. proposed Boltzmann Machine (BM) and Restricted Boltzmann Machine (RBM) \cite{hinton1983optimal,ackley1985learning}. These models are energy-based models \cite{lecun2006tutorial} and the names come from the Boltzmann distribution \cite{boltzmann1868studien,gibbs1902elementary} used in these models. 
A BM has weighted links between two layers of neurons as well as links between neurons of every layer. RBM restricts these links to not have links between neurons of a layer. 
BM and RBM take one of the layers as the layer of data and the other layer as a representation or embedding of data. 
BM and RBM are special cases of the Ising model whose weights (coupling parameters) are learned. BM and RBM are also special cases of the Hopfield network whose weights are learned by maximum likelihood estimation rather than the Hebbian learning method \cite{hebb1949organization} which is used in the Hopfield network.

The Hebbian learning method, used in the Hopfield network, was very weak and could not generalize well to unseen data. Therefore, backpropagation \cite{rumelhart1986learning} was proposed for training neural networks.
Backpropagation was gradient descent plus the chain rule technique. 
However, researchers found out soon that neural networks cannot get deep in their number of layers. This is because, in deep networks, gradients become very small in the initial layers after many chain rules from the last layers of network. This problem was called vanishing gradients. This problem of networks plus the glory of theory in kernel support vector machines \cite{boser1992training} resulted in the winter of neural networks in the last years of previous century until around 2006.  

During the winter of neural networks, Hinton tried to save neural networks from being forgotten in the history of machine learning. So, he returned to his previously proposed RBM and proposed a learning method for RBM with the help of some other researchers including Max Welling \cite{hinton2002training,welling2004exponential}. 
They proposed training the weights of BM and RBM using maximum likelihood estimation. BM and RBM can be seen as generative models where new values for neurons can be generated using Gibbs sampling \cite{geman1984stochastic}. 
Hinton noticed RBM because he knew that the set of weights between every two layers of a neural network is an RBM. 
It was in the year 2006 \cite{hinton2006reducing,hinton2006fast} that he thought it is possible to train a network in a greedy way\footnote{A greedy algorithm makes every decision based on the most benefit at the current step and does not care about the final outcome at the final step. This greedy approach hopes that the final step will obtain a good result by small best steps based on their current benefits.} \cite{bengio2007greedy} where the weights of every layer of network is trained using RBM training. 
This stack of RBM models with a greedy algorithm for training was named Deep Belief Network (DBN) \cite{hinton2006fast,hinton2009deep}. 
DBN allowed the networks to become deep by preparing a good initialization of weights (using RBM training) for backpropagation. This good starting point for backpropagation optimization did not face the problem of vanishing gradients anymore. 
Since the breakthrough in 2006 \cite{hinton2006reducing}, the winter of neural networks started to end gradually because the networks could get deep to become more nonlinear and handle more nonlinear data. 

DBN was used in different applications including speech recognition \cite{mohamed2009deep,mohamed2010phone,mohamed2011acoustic} and action recognition \cite{taylor2007modeling}.
Hinton was very excited about the success of RBM and was thinking that the future of neural networks belongs to DBN. However, two important techniques were proposed, which were the ReLU activation function \cite{glorot2011deep} and the dropout technique \cite{srivastava2014dropout}. These two regularization methods prevented overfitting \cite{ghojogh2019theory} and resolved vanishing gradients even without RBM pre-training. Hence, backpropagation could be used alone if the new regularization methods were utilized. The success of neural networks was found out more \cite{lecun2015deep} by its various applications, for example in image recognition \cite{karpathy2015deep}.

This is a tutorial and survey paper on BM, RBM, and DBN. 
The remainder of this paper is as follows. We briefly review the required background on probabilistic graphical models, Markov random field, Gibbs sampling, statistical physics, Ising model, and the Hopfield network in Section \ref{section_background}. The structure of BM and RBM, Gibbs sampling in RBM, training RBM by maximum likelihood estimation, contrastive divergence, and training BM are explained in Section \ref{section_RBM}. Then, we introduce different cases of states for units in RBM in Section \ref{section_distributions_of_variables}. Conditional RBM and DBN are explained in Sections \ref{section_conditional_RBM} and \ref{section_DBN}, respectively. 
Finally, Section \ref{section_conclusion} concludes the paper.

\section*{Required Background for the Reader}

This paper assumes that the reader has general knowledge of calculus, probability, linear algebra, and basics of optimization. 
The required background on statistical physics is explained in the paper. 

\section{Background}\label{section_background}

\subsection{Probabilistic Graphical Model and Markov Random Field}


A Probabilistic Graphical Model (PGM) is a graph-based representation of a complex distribution in a possibly high dimensional space \cite{koller2009probabilistic}. 
In other words, PGM is a combination of graph theory and probability theory. 
In a PGM, the random variables are represented by nodes or vertices. There exist edges between two variables which have interaction with one another in terms of probability. Different conditional probabilities can be represented by a PGM. 
There exist two types of PGM which are Markov network (also called Markov random field) and Bayesian network \cite{koller2009probabilistic}. In the Markov network and Bayesian network, the edges of graph are undirected and directed, respectively. 
BM and RBM are Markov networks (Markov random field) because their links are undirected \cite{hinton2007boltzmann}. 

\subsection{Gibbs Sampling}\label{section_Gibbs_sampling_background}

Gibbs sampling, firstly proposed by \cite{geman1984stochastic}, draws samples from a $d$-dimensional multivariate distribution $\mathbb{P}(X)$ using $d$ conditional distributions  \cite{bishop2006pattern}. 
This sampling algorithm assumes that the conditional distributions of every dimension of data conditioned on the rest of coordinates are simple to draw samples from.

In Gibbs sampling, we desire to sample from a multivariate distribution $\mathbb{P}(X)$ where $X \in \mathbb{R}^d$. Consider the notation $\mathbb{R}^d \ni \b{x} := [x_1, x_2, \dots, x_d]^\top$.
We start from a random $d$-dimensional vector in the range of data. Then, we sample the first dimension of the first sample from the distribution of the first dimension conditioned on the other dimensions. We do it for all dimensions, where the $j$-th dimension is sampled as \cite{ghojogh2020sampling}:
\begin{align}
x_j \sim \mathbb{P}(x_j\, |\, x_1, \dots, x_{j-1}, x_{j+1}, \dots, x_d).
\end{align}
We do this for all dimensions until all dimensions of the first sample are drawn. Then, starting from the first sample, we repeat this procedure for the dimensions of the second sample. We iteratively perform this for all samples; however, some initial samples are not yet valid because the algorithm has started from a not-necessarily valid vector. We accept all samples after some burn-in iterations. 
Gibbs sampling can be seen as a special case of the Metropolis-Hastings algorithm which accepts the proposed samples with probability one (see \cite{ghojogh2020sampling} for proof).
Gibbs sampling is used in BM and RBM for generating visible and hidden samples. 

\subsection{Statistical Physics and Ising Model}

\subsubsection{Boltzmann (Gibbs) Distribution}

Assume we have several particles $\{x_i\}_{i=1}^d$ in statistical physics. These particles can be seen as randome variables which can randomly have a state. For example, if the particles are electrons, they can have states $+1$ and $-1$ for counterclockwise and clockwise spins, respectively. 
The \textit{Boltzmann distribution} \cite{boltzmann1868studien}, also called the \textit{Gibbs distribution} \cite{gibbs1902elementary}, can show the probability that a physical system can have a specific state. i.e., every of the particles has a specific state. The probability mass function of this distribution is \cite{huang1987statistical}:
\begin{align}\label{equation_Boltmann_distribution}
\mathbb{P}(x) = \frac{e^{-\beta E(x)}}{Z},
\end{align}
where $E(x)$ is the energy of variable $x$ and $Z$ is the normalization constant so that the probabilities sum to one. This normalization constant is called the \textit{partition function} which is hard to compute as it sums over all possible configurations of states (values) that the particles can have. If we define $\mathbb{R}^d \ni \b{x} := [x_1, \dots, x_d]^\top$, we have:
\begin{align}\label{equation_partition_function_of_Boltzmann_distribution}
Z := \sum_{\b{x} \in \mathbb{R}^d} e^{-\beta E(x)}.
\end{align}
The coefficient $\beta \geq 0$ is defined as:
\begin{align}
\beta := \frac{1}{k_\beta T} \propto \frac{1}{T},
\end{align}
where $k_\beta$ is the Boltzmann constant and $T \geq 0$ is the absolute thermodynamic temperature in Kelvins. 
If the temperature tends to absolute zero, $T \rightarrow 0$, we have $\beta \rightarrow \infty$ and $\mathbb{P}(x) \rightarrow 0$, meaning that the absolute zero temperature occurs extremely rarely in the universe. 

The \textit{free energy} is defined as:
\begin{align}
F(\beta) := \frac{-1}{\beta} \ln(Z),
\end{align}
where $\ln(.)$ is the natural logarithm. 
The \textit{internal energy} is defined as:
\begin{align}
U(\beta) := \frac{\partial}{\partial \beta} \big(\beta\, F(\beta)\big).
\end{align}
Therefore, we have:
\begin{align}
U(\beta) &= \frac{\partial}{\partial \beta} (- \ln(Z)) = \frac{-1}{Z} \frac{\partial Z}{\partial \beta} \nonumber \\
&\overset{(\ref{equation_partition_function_of_Boltzmann_distribution})}{=} \sum_{\b{x} \in \mathbb{R}^d} E(x)\, \frac{e^{-\beta E(x)}}{Z} \overset{(\ref{equation_Boltmann_distribution})}{=} \sum_{\b{x} \in \mathbb{R}^d} \mathbb{P}(x) E(x). \label{equation_internal_energy_relatedTo_energy}
\end{align}
The \textit{entropy} is defined as:
\begin{align}
H(\beta) &:= -\sum_{\b{x} \in \mathbb{R}^d} \mathbb{P}(x) \ln\big(\mathbb{P}(x)\big) \nonumber\\
&\overset{(\ref{equation_Boltmann_distribution})}{=} -\sum_{\b{x} \in \mathbb{R}^d} \mathbb{P}(x) \big(-\beta E(x) - \ln(Z)\big) \nonumber\\
&= \beta \sum_{\b{x} \in \mathbb{R}^d} \mathbb{P}(x) E(x) + \ln(Z) \underbrace{\sum_{\b{x} \in \mathbb{R}^d} \mathbb{P}(x)}_{=1} \nonumber\\
&\overset{(a)}{=} -\beta\, F(\beta) + \beta\, U(\beta),
\end{align}
where $(a)$ is because of Eqs. (\ref{equation_internal_energy_relatedTo_energy}) and (\ref{equation_partition_function_of_Boltzmann_distribution}).

\begin{lemma}\label{lemma_tend_to_low_energy}
A physical system prefers to be in low energy; hence, the system always loses energy to have less energy. 
\end{lemma}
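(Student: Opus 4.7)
The plan is to read the claim as a probabilistic statement: in the Boltzmann ensemble, configurations of lower energy are strictly more likely, so ``the system prefers low energy'' just means that the mode of the distribution $\mathbb{P}(x)$ in \eqref{equation_Boltmann_distribution} is at the minimum-energy configuration. I would therefore not appeal to any dynamical argument, but deduce the monotone relation between energy and probability directly from the formula for $\mathbb{P}(x)$.

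First I would fix $\beta \geq 0$ and consider two configurations $\b{x}, \b{x}'$ with $E(\b{x}) < E(\b{x}')$. From \eqref{equation_Boltmann_distribution},
\begin{align*}
\frac{\mathbb{P}(\b{x})}{\mathbb{P}(\b{x}')} = \frac{e^{-\beta E(\b{x})}/Z}{e^{-\beta E(\b{x}')}/Z} = e^{\beta\,(E(\b{x}') - E(\b{x}))} \geq 1,
\end{align*}
with strict inequality whenever $\beta > 0$. Hence the map $E(\b{x}) \mapsto \mathbb{P}(\b{x})$ is strictly decreasing in $E(\b{x})$, so the maximum of $\mathbb{P}$ is attained precisely at the minimizer of $E$. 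Combined with the law of large numbers/ergodic interpretation of the Gibbs ensemble, this says that over time the system visits low-energy configurations far more often than high-energy ones, which is exactly the qualitative statement of the lemma.

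To round out the picture I would also note the thermodynamic version, which reinterprets the same fact through the free energy. Since $F(\beta) = -\beta^{-1}\ln Z$ and $H(\beta) = -\beta F(\beta) + \beta U(\beta)$, we have $F = U - \beta^{-1} H$; as $\beta \to \infty$ (equivalently $T \to 0$) the entropy term becomes negligible, and minimizing $F$ reduces to minimizing the internal energy $U(\beta) = \sum_{\b{x}} \mathbb{P}(x) E(x)$. So the probabilistic statement and the ``systems minimize free energy, which at low temperature is just energy'' statement agree.

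The only subtle point, and the one I would flag as the main obstacle, is that the lemma is phrased dynamically (``the system \emph{loses} energy'') whereas the Boltzmann distribution is an equilibrium object that says nothing per se about trajectories in time. I would handle this by stating that ``prefers low energy'' is being used in the equilibrium sense — i.e.\ $\mathbb{P}$ concentrates on low-energy states — and that if one additionally assumes the system is sampled via a reversible Markov chain with stationary distribution $\mathbb{P}$ (such as the Gibbs sampler used later in the paper), convergence to $\mathbb{P}$ translates the equilibrium preference into the dynamical assertion of the lemma. Beyond this interpretive step the proof is the one-line inequality above.
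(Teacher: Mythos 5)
Your argument is sound as far as such a physically-flavored statement can be made sound, but it takes a genuinely different route from the paper. The paper's proof is a purely thermodynamic appeal: it invokes the second law (entropy of a system increases with time), observes that a system which sheds energy to its surroundings becomes more disordered, and concludes that energy must decrease so that entropy can grow. You instead work directly from the Boltzmann distribution in Eq.~(\ref{equation_Boltmann_distribution}), showing via the ratio $\mathbb{P}(\b{x})/\mathbb{P}(\b{x}') = e^{\beta(E(\b{x}')-E(\b{x}))}$ that probability is strictly decreasing in energy, and then reinterpreting ``prefers low energy'' as concentration of the equilibrium measure on low-energy configurations. What your approach buys is that it derives the corollary stated immediately after the lemma (probabilities of states increase over time) as the primary fact rather than as a consequence, it connects cleanly to the free-energy identity $F = U - \beta^{-1}H$ already derived in the section, and --- importantly --- it is honest about the one real gap in any such proof: an equilibrium distribution says nothing about trajectories, so the dynamical phrasing ``the system \emph{loses} energy'' requires an extra assumption (e.g., relaxation to stationarity under a reversible Markov dynamics such as the Gibbs sampler used later). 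The paper's proof hides that same gap inside the informal second-law narrative; your version surfaces it explicitly, which is arguably preferable in a tutorial, at the cost of being less self-contained physically (it presupposes that the system is already described by, or converging to, the Boltzmann ensemble rather than deriving the tendency from thermodynamic first principles).
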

\begin{proof}
On the one hand, according to the second law of thermodynamics, entropy of a physical system always increases by passing time \cite{carroll2010eternity}. Entropy is a measure of randomness and disorder in system. On the other hand, when a system loses energy to its surrounding, it becomes less ordered. Hence, by passing time, the energy of system decreases to have more entropy. Q.E.D.
\end{proof}

\begin{corollary}
According to Eq. (\ref{equation_Boltmann_distribution}) and Lemma \ref{lemma_tend_to_low_energy}, the probability $\mathbb{P}(x)$ of states in a system tend to increase by passing time. 
\end{corollary}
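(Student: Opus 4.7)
The plan is to observe that the Boltzmann weight $e^{-\beta E(x)}$ is a strictly decreasing function of the energy $E(x)$ whenever $\beta \geq 0$, and that the partition function $Z$ in Eq.~(\ref{equation_partition_function_of_Boltzmann_distribution}) is a system-level constant (a sum over \emph{all} configurations at fixed $\beta$) which does not depend on the current microstate. So the time dependence of $\mathbb{P}(x)$ enters only through $E(x)$, and the monotonicity of the exponential composes with the monotonic decrease of energy from Lemma~\ref{lemma_tend_to_low_energy} to give the claim.

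Concretely, I would first differentiate the right-hand side of Eq.~(\ref{equation_Boltmann_distribution}) with respect to $E(x)$ to get $-\beta\, e^{-\beta E(x)}/Z \leq 0$, certifying that $\mathbb{P}(x)$ is non-increasing in $E(x)$. Next, I would invoke Lemma~\ref{lemma_tend_to_low_energy}, which states that $E(x)$ decreases over time. Chaining the two monotonicities yields that $\mathbb{P}(x)$ increases with time, which is the claim.

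I do not anticipate a real obstacle here: the argument is a one-line composition of monotone maps. The only point that deserves an explicit sentence is that $Z$ is held fixed along the evolution (since $\beta$ and the state space are fixed), so that all time variation in $\mathbb{P}(x)$ is carried by the numerator $e^{-\beta E(x)}$. With that remark in place, the corollary follows immediately from Eq.~(\ref{equation_Boltmann_distribution}) and Lemma~\ref{lemma_tend_to_low_energy}.
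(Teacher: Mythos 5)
Your argument is correct and is essentially the same one the paper intends: $\mathbb{P}(x) = e^{-\beta E(x)}/Z$ is monotonically non-increasing in $E(x)$ for $\beta \geq 0$ with $Z$ fixed, so the decrease of energy guaranteed by Lemma~\ref{lemma_tend_to_low_energy} forces $\mathbb{P}(x)$ to increase over time. You merely make explicit (via the derivative $-\beta e^{-\beta E(x)}/Z \leq 0$ and the remark that $Z$ is a configuration-independent constant) what the paper leaves as an immediate consequence of Eq.~(\ref{equation_Boltmann_distribution}) and the lemma.
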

This corollary makes sense because systems tend to become more probable. This idea is also used in simulated annealing\footnote{Simulated annealing is a metaheuristic optimization algorithm in which a temperature parameter controls the amount of global search versus local search. It reduces the temperature gradually to decrease the exploration and increase the exploitation of the search space, gradually.} \cite{kirkpatrick1983optimization} where the temperature of system is cooled down gradually. Simulated annealing and temperature-based learning have been used in BM models \cite{passos2018temperature,alberici2020annealing,alberici2021deep}. 

\subsubsection{Ising Model}


The Ising model \cite{lenz1920beitrvsge,ising1925beitrag}, also known as the Lenz-Ising model, is a model in which the particles can have $-1$ or $+1$ spins \cite{brush1967history}. Therefore, $x_i \in \{-1, +1\}, \forall i \in \{1, \dots, d\}$. It uses the Boltzmann distribution, Eq. (\ref{equation_Boltmann_distribution}), where the energy function is defined as:
\begin{align}\label{equation_energy_in_Ising_model}
E(x) := \mathcal{H}(x) = -\sum_{(i,j)} J_{ij}\, x_i\, x_j,
\end{align}
where $\mathcal{H}(x)$ is called the Hamiltonian, $J_{ij} \in \mathbb{R}$ is the coupling parameter, and the summation is over particles which interact with each other. Note that as energy is proportional to the reciprocal of squared distance, nearby particles are only assumed to be interacting. Therefore, usually the interaction graph of particles is a chain (one dimensional grid), mesh grid (lattice), closed chain (loop), or torus (multi-dimensional loop). 

Based on the characteristic of model, the coupling parameter has different values. If for all interacting $i$ and $j$, we have $J_{ij} \geq 0$ or $J_{ij} < 0$, the model is named \textit{ferromagnetic} and \textit{anti-ferromagnetic}, respectively. If $J_{ij}$ can be both positive and negative, the model is called a \textit{spin glass}. If the coupling parameters are all constant, the model is \textit{homogeneous}.
According to Lemma \ref{lemma_tend_to_low_energy}, the energy decreases over time. According to Eq. (\ref{equation_energy_in_Ising_model}), in ferromagnetic model ($J_{ij} \geq 0$), the energy of Ising model decreases if the interacting $x_i$ and $x_j$ have the same state (spin) because of the negative sign before summation. Likewise, in anti-ferromagnetic models, the nearby particles tend to have different spins over time. 

According to Eq. (\ref{equation_energy_in_Ising_model}), in ferromagnetic models, the energy is zero if $J_{ij} = 0$. This results in $\mathbb{P}(x) = 1$ according to Eq. (\ref{equation_Boltmann_distribution}). With similar analysis and according to the previous discussion, in ferromagnetic models, $J_{ij} \rightarrow \infty$ yields to having the same spins for all particles. This means that we finally have all $+1$ spins with probability of half or all $-1$ spins with probability of half. 
Ising models can be modeled as normal factor graphs. For more information on this, refer to \cite{molkaraie2017primal,molkaraie2020marginal}. 
The BM and RBM are Ising models whose coupling parameters are considered as weights and these weights are learned using maximum likelihood estimation \cite{hinton2007boltzmann}. Hence, we can say that BM and RBM are energy-based learning methods \cite{lecun2006tutorial}.

\subsection{Hopfield Network}\label{section_background_Hopfield}

It was proposed in \cite{little1974existence} to use the Ising model in a neural network structure. Hopfield extended this idea to model the memory by a neural network. The resulting network was the Hopfield network \cite{hopfield1982neural}. 
This network has some units or neurons denoted by $\{x_i\}_{i=1}^d$.
The states or outputs of units are all binary $x_i \in \{-1,+1\}, \forall i$. Let $w_{ij}$ denote the weight of link connecting unit $i$ to unit $j$. The weights of Hopfield network are learned using the Hebbian learning (Hebb's law of association) \cite{hebb1949organization}:
\begin{align}
w_{ij} := 
\left\{
    \begin{array}{ll}
        x_i \times x_j & \mbox{if } i \neq j, \\
        0 & \mbox{otherwise.}
    \end{array}
\right.
\end{align}
After training, the outputs of units can be determined for an input if the weighted summation of inputs to unit passes a threshold $\theta$:
\begin{align}
x_i := 
\left\{
    \begin{array}{ll}
        +1 & \mbox{if } \sum_{j=1}^d w_{ij} x_j \geq \theta, \\
        -1 & \mbox{otherwise.}
    \end{array}
\right.
\end{align}
In the original paper of Hopfield network \cite{hopfield1982neural}, the binary states are $x_i \in \{0,1\}, \forall i$ so the Hebbian learning is $w_{ij} := (2 x_i - 1) \times (2 x_j - 1), \forall i \neq j$. 
Hopfield network is an Ising model so it uses Eq. (\ref{equation_energy_in_Ising_model}) as its energy. This energy is also used in the Boltzmann distribution which is Eq. (\ref{equation_Boltmann_distribution}).

It is noteworthy that there are also Hopfield networks with continuous states \cite{hopfield1984neurons}. 
Modern Hopfield networks, such as \cite{ramsauer2020hopfield}, are often based on dense associative memories \cite{krotov2016dense}. 
Some other recent works on associative memories are \cite{krotov2021large,krotov2021hierarchical}. 
The BM and RBM models are Hopfield networks whose weights are learned using maximum likelihood estimation and not Hebbian learning. 

\section{Restricted Boltzmann Machine}\label{section_RBM}





\subsection{Structure of Restricted Boltzmann Machine}\label{section_RBM_structure}

\begin{figure*}[!t]
\centering
\includegraphics[width=5in]{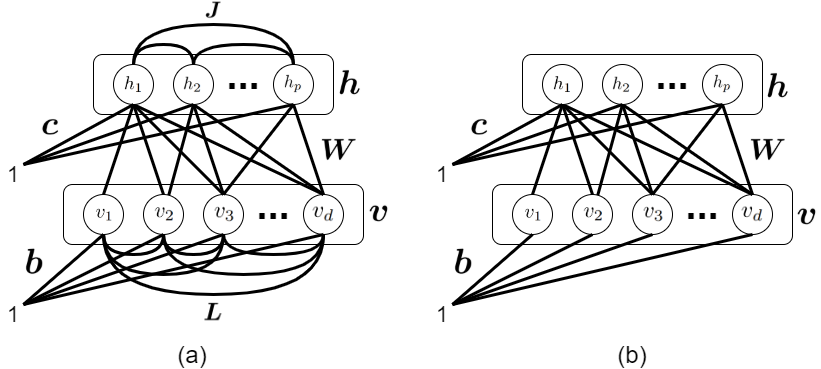}
\caption{The structures of (a) a Boltzmann machine and (b) a restricted Boltzmann machine.}
\label{figure_BM_and_RBM}
\end{figure*}

Boltzmann Machine (BM) is a generative model and a Probabilistic Graphical Model (PGM) \cite{bishop2006pattern} which is a building block of many probabilistic models. 
Its name is because of the Boltzmann distribution \cite{boltzmann1868studien,gibbs1902elementary} used in this model. 
It was first introduced to be used in machine learning in \cite{hinton1983optimal,ackley1985learning} and then in \cite{hinton2002training,welling2004exponential}.
A BM consists of a visible (or observation) layer $\b{v} = [v_1, \dots, v_d] \in \mathbb{R}^d$ and a hidden layer $\b{h} = [h_1, \dots, h_p] \in \mathbb{R}^p$. The visible layer is the layer that we can see; for example, it can be the layer of data. The hidden layer is the layer of latent variables which represent meaningful features or embeddings for the visible data. In other words, there is a meaningful connection between the hidden and visible layers although their dimensionality might differ, i.e., $d \neq p$. In the PGM of BM, there are connection links between elements of $\b{v}$ and elements of $\b{h}$. Each of the elements of $\b{v}$ and $\b{h}$ also have a bias. There are also links between the elements of $\b{v}$ as well as between the elements of $\b{h}$ \cite{salakhutdinov2009deep}. Let $w_{ij}$ denote the link between $v_i$ and $h_j$, and $l_{ij}$ be the link between $v_i$ and $v_j$, and $j_{ij}$ be the link between $h_i$ and $h_j$, and $b_i$ be the bias link for $v_i$, and $c_i$ be the bias link for $h_i$. The dimensionality of these links are $\b{W} = [w_{ij}] \in \mathbb{R}^{d \times p}$, $\b{L} = [l_{ij}] \in \mathbb{R}^{d \times d}$, $\b{J} = [j_{ij}] \in \mathbb{R}^{p \times p}$, $\b{b} = [b_1, \dots, b_d] \in \mathbb{R}^d$, and $\b{c} = [c_1, \dots, c_p] \in \mathbb{R}^p$. 
Note that $\b{W}$ is a symmetric matrix, i.e., $w_{ij} = w_{ji}$. Also, as there is no link from a node to itself, the diagonal elements of $\b{L}$ and $\b{J}$ are zero, i.e., $l_{ii} = j_{ii} = 0, \forall i$.
Restricted Boltzmann Machine (RBM) is BM which does not have links within a layer, i.e., there is no any link between the elements of $\b{v}$ and no any link between the elements of $\b{h}$. In other words, the links are restricted in RBM to be $\b{L} = \b{J} = \b{0}$. 
Figure \ref{figure_BM_and_RBM} depicts both BM and RBM with their layers and links. 
In this section, we focus on RBM. 

Recall that RBM is an Ising model.
As we saw in Eq. (\ref{equation_energy_in_Ising_model}), the energy of an Ising model can be modeled as \cite{hinton1983optimal,ackley1985learning}: 
\begin{align}\label{equation_energy_function}
\mathbb{R} \ni E(\b{v}, \b{h}) := -\b{b}^\top \b{v} - \b{c}^\top \b{h} - \b{v}^\top \b{W} \b{h},
\end{align}
which is based on interactions between linked units. 
As introduced in Eq. (\ref{equation_Boltmann_distribution}), the visible and hidden variables make a joint Boltzmann distribution \cite{hinton2012practical}:
\begin{equation}\label{equation_joint_prob_v_and_h}
\begin{aligned}
\mathbb{P}(\b{v}, \b{h}) &= \frac{1}{Z} \exp(-E(\b{v}, \b{h})) \\
&\overset{(\ref{equation_energy_function})}{=} \frac{1}{Z} \exp(\b{b}^\top \b{v} + \b{c}^\top \b{h} + \b{v}^\top \b{W} \b{h}),
\end{aligned}
\end{equation}
where $\b{Z}$ is the partition function:
\begin{align}\label{equation_partition_function}
Z := \sum_{\b{v} \in \mathbb{R}^d} \sum_{\b{h} \in \mathbb{R}^p} \exp(-E(\b{v}, \b{h})).
\end{align}

According to Lemma \ref{lemma_tend_to_low_energy}, the BM and RBM try to reduce the energy of model. Training the BM or RBM reduces its energy \cite{hinton1983optimal,ackley1985learning}.

\subsection{Conditional Distributions}

\begin{proposition}[Conditional Independence of Variables]\label{proposition_RBM_conditional_independence}
In RBM, given the visible variables, the hidden variables are conditionally independent. Likewise, given the hidden variables, the visible variables are conditionally independent. This does not hold in BM because of the links within each layer.
\end{proposition}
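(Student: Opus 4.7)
The plan is to prove the hidden-given-visible direction by direct computation from Eq.~(\ref{equation_joint_prob_v_and_h}); the visible-given-hidden direction then follows by symmetry of the energy (\ref{equation_energy_function}) under the swap $(\b{v},\b{b},\b{W}) \leftrightarrow (\b{h},\b{c},\b{W}^\top)$, so I would write out only one side in detail. The target identity is the factorization
\begin{align*}
\mathbb{P}(\b{h}\,|\,\b{v}) \;=\; \prod_{j=1}^{p} \mathbb{P}(h_j\,|\,\b{v}),
\end{align*}
which is equivalent to the claimed conditional independence of the hidden units given $\b{v}$.

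First I would apply Bayes' rule, $\mathbb{P}(\b{h}|\b{v}) = \mathbb{P}(\b{v},\b{h})/\mathbb{P}(\b{v})$, substitute Eq.~(\ref{equation_joint_prob_v_and_h}), and observe that the $\b{b}^\top\b{v}$ contribution depends only on $\b{v}$ and hence cancels with an identical factor in the denominator (equivalently, is absorbed into the $\b{v}$-only normalizer). The surviving $\b{h}$-dependent part of $-E(\b{v},\b{h})$ is $\b{c}^\top\b{h} + \b{v}^\top\b{W}\b{h}$, which I would rewrite index-wise as $\sum_{j=1}^{p} h_j\,(c_j + \b{w}_{:,j}^\top \b{v})$, where $\b{w}_{:,j}$ is the $j$-th column of $\b{W}$. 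The crucial point is that this is a sum of \emph{single-}$h_j$ terms with \emph{no} $h_j h_k$ cross-terms, and this is precisely because the RBM restriction $\b{J}=\b{0}$ removes the intra-hidden quadratic $\b{h}^\top\b{J}\b{h}$ that would otherwise sit in the energy of a general BM. With this decomposition the exponential factorizes as $\exp(-E(\b{v},\b{h})) \propto \prod_{j} \exp\!\big(h_j(c_j + \b{w}_{:,j}^\top\b{v})\big)$ up to $\b{v}$-only factors; the marginal $\mathbb{P}(\b{v})$ is obtained by summing (or integrating) over $\b{h}$, and this sum itself factorizes into per-unit partition functions $Z_j(\b{v}) := \sum_{h_j} \exp(h_j(c_j + \b{w}_{:,j}^\top\b{v}))$. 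Dividing gives $\mathbb{P}(\b{h}|\b{v}) = \prod_{j} \exp(h_j(c_j + \b{w}_{:,j}^\top\b{v}))/Z_j(\b{v})$, and each factor is immediately recognizable as $\mathbb{P}(h_j|\b{v})$.

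For the BM half of the statement I would only flag the obstruction rather than carry out a proof: with $\b{J}\neq\b{0}$ the exponent contains coupling terms $j_{jk}\, h_j h_k$ that cannot be separated into single-$h_j$ factors, so the joint exponential no longer splits and the factorization argument breaks down, which is exactly why conditional independence generally fails in BM. There is no serious technical obstacle here; the only care required is bookkeeping of which terms depend on $\b{v}$ versus $\b{h}$ when cancelling the partition function, and noting that the same argument goes through whether the $h_j$ are binary, multinomial, or continuous, since only the product structure of the measure over $\b{h}$ is used.
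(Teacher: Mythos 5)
Your proposal is correct and follows essentially the same route as the paper's proof: Bayes' rule, cancellation of the $\b{v}$-only factor $\exp(\b{b}^\top\b{v})$, and factorization of the surviving exponent $\sum_{j} h_j(c_j + \b{W}_{:j}^\top\b{v})$ into per-unit factors, with the reverse direction by the obvious symmetry. Your two small additions --- showing that the conditional normalizer itself splits into per-unit partition functions $Z_j(\b{v})$ (which is what rigorously licenses reading each factor as $\mathbb{P}(h_j|\b{v})$), and explicitly identifying the $\b{h}^\top\b{J}\b{h}$ cross-terms as the obstruction in a general BM --- tighten two points the paper leaves implicit.
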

\begin{proof}
According to the Bayes' rule, we have:
\begin{align*}
&\mathbb{P}(\b{h} | \b{v}) = \frac{\mathbb{P}(\b{h}, \b{v})}{\mathbb{P}(\b{v})} = \frac{\mathbb{P}(\b{v}, \b{h})}{\sum_{\b{h} \in \mathbb{R}^p} \mathbb{P}(\b{v}, \b{h})} \\
&\overset{(\ref{equation_joint_prob_v_and_h})}{=} \frac{\frac{1}{Z} \exp(\b{b}^\top \b{v} + \b{c}^\top \b{h} + \b{v}^\top \b{W} \b{h})}{\sum_{\b{h} \in \mathbb{R}^p} \frac{1}{Z} \exp(\b{b}^\top \b{v} + \b{c}^\top \b{h} + \b{v}^\top \b{W} \b{h})} \\
&= \frac{\frac{1}{Z} \exp(\b{b}^\top \b{v}) \exp(\b{c}^\top \b{h}) \exp(\b{v}^\top \b{W} \b{h})}{\frac{1}{Z} \sum_{\b{h} \in \mathbb{R}^p} \exp(\b{b}^\top \b{v}) \exp(\b{c}^\top \b{h}) \exp(\b{v}^\top \b{W} \b{h})} 
\end{align*}
\begin{align*}
&\overset{(a)}{=} \frac{\exp(\b{b}^\top \b{v}) \exp(\b{c}^\top \b{h}) \exp(\b{v}^\top \b{W} \b{h})}{\exp(\b{b}^\top \b{v}) \sum_{\b{h} \in \mathbb{R}^p} \exp(\b{c}^\top \b{h}) \exp(\b{v}^\top \b{W} \b{h})} \\
&= \frac{\exp(\b{c}^\top \b{h}) \exp(\b{v}^\top \b{W} \b{h})}{\sum_{\b{h} \in \mathbb{R}^p} \exp(\b{c}^\top \b{h}) \exp(\b{v}^\top \b{W} \b{h})},
\end{align*}
where $(a)$ is because the term $\exp(\b{b}^\top \b{v})$ does not have $\b{h}$ in it. 
Note that $\sum_{\b{h} \in \mathbb{R}^p}$ denotes summation over all possible $p$-dimensional hidden variables for the sake of marginalization.
Let $Z' := \sum_{\b{h} \in \mathbb{R}^p} \exp(\b{c}^\top \b{h}) \exp(\b{v}^\top \b{W} \b{h})$. Hence:
\begin{align}
\mathbb{P}(\b{h} | \b{v}) &= \frac{1}{Z'} \exp(\b{c}^\top \b{h} + \b{v}^\top \b{W} \b{h}) \nonumber \\
&= \frac{1}{Z'} \exp\Big(\sum_{j=1}^p c_j h_j + \sum_{j=1}^p \b{v}^\top \b{W}_{:j} h_j\Big) \nonumber \\
&= \frac{1}{Z'} \prod_{j=1}^p \exp( c_j h_j + \b{v}^\top \b{W}_{:j} h_j), \label{equation_prob_h_given_v} 
\end{align}
where $\b{W}_{:j} \in \mathbb{R}^d$ denotes the $j$-th column of matrix $\b{W}$.
The Eq. (\ref{equation_prob_h_given_v}) shows that given the visible variables, the hidden variables are conditionally independent because their joint distribution is the product of every distribution. We can write similar expressions for the probability $\mathbb{P}(\b{v} | \b{h})$:
\begin{align}
\mathbb{P}(\b{v} | \b{h}) &= \frac{1}{Z''} \exp(\b{b}^\top \b{v} + \b{v}^\top \b{W} \b{h}) \nonumber \\
&= \frac{1}{Z''} \exp\Big(\sum_{i=1}^d b_i v_i + \sum_{i=1}^d v_i \b{W}_{i:} \b{h}\Big) \nonumber \\
&= \frac{1}{Z''} \prod_{i=1}^d \exp( b_i v_i + v_i \b{W}_{i:} \b{h}), \label{equation_prob_v_given_h} 
\end{align}
where $\b{W}_{i:} \in \mathbb{R}^p$ denotes the $i$-th row of matrix $\b{W}$ and $Z'' := \sum_{\b{v} \in \mathbb{R}^d} \exp(\b{b}^\top \b{v}) \exp(\b{v}^\top \b{W} \b{h})$.
This equation shows that given the hidden variables, the visible variables are conditionally independent. Q.E.D.
\end{proof}

According to Eq. (\ref{equation_prob_h_given_v}) and considering the rule $\mathbb{P}(\b{h} | \b{v}) = \mathbb{P}(\b{h}, \b{v}) / \mathbb{P}(\b{v})$, we have:
\begin{align}
&\mathbb{P}(\b{h} | \b{v}) = \frac{1}{Z'}  \prod_{j=1}^p \exp( c_j h_j + \b{v}^\top \b{W}_{:j} h_j) \nonumber \\
&~~~~~~~~~~~~ = \frac{1}{Z'} \prod_{j=1}^p \mathbb{P}(h_j, \b{v}) \nonumber 
\end{align}
\begin{equation}\label{equation_prob_h_j_and_v}
\begin{aligned}
\implies \mathbb{P}(h_j, \b{v}) &= \exp( c_j h_j + \b{v}^\top \b{W}_{:j} h_j) \\
&= \exp( c_j h_j + \sum_{i=1}^d v_i w_{ij} h_j). 
\end{aligned}
\end{equation}
Similarly, according to Eq. (\ref{equation_prob_v_given_h}) and considering the rule $\mathbb{P}(\b{v} | \b{h}) = \mathbb{P}(\b{h}, \b{v}) / \mathbb{P}(\b{h})$, we have:
\begin{align}
&\mathbb{P}(\b{h} | \b{v}) = \frac{1}{Z''}  \prod_{i=1}^d \exp( b_i v_i + v_i \b{W}_{i:}\, \b{h}) \nonumber \\
&~~~~~~~~~~~~ = \frac{1}{Z''} \prod_{i=1}^d \mathbb{P}(\b{h}, v_i) \nonumber 
\end{align}
\begin{equation}\label{equation_prob_h_and_v_i}
\begin{aligned}
\implies \mathbb{P}(\b{h}, v_i) &= \exp( b_i v_i + v_i \b{W}_{i:}\, \b{h}) \\
&= \exp( b_i v_i + \sum_{j=1}^p v_i w_{ij} h_j). 
\end{aligned}
\end{equation}

We will use these equations later.

\subsection{Sampling Hidden and Visible Variables}\label{section_RBM_Gibbs_sampling}

\subsubsection{Gibbs Sampling}

We can use Gibbs sampling for sampling and generating the hidden and visible units. If $\nu$ denotes the iteration index of Gibbs sampling, we iteratively sample:
\begin{align}
& \b{h}^{(\nu)} \sim \mathbb{P}(\b{h} | \b{v}^{(\nu)}), \\
& \b{v}^{(\nu+1)} \sim \mathbb{P}(\b{v} | \b{h}^{(\nu)}),
\end{align}
until the burn-in convergence. As was explained in Section \ref{section_Gibbs_sampling_background}, only several iterations of Gibbs sampling are usually sufficient. After the burn-in, the samples are approximate samples from the joint distribution $\mathbb{P}(\b{v}, \b{h})$.
As the variables are conditionally independent, this Gibbs sampling can be implemented as in Algorithm \ref{algorithm_RBM_Gibbs_sampling}. In this algorithm, $h_j^{(\nu)} \sim \mathbb{P}(h_j | \b{v}^{(\nu)})$ can be implemented as drawing a sample from uniform distribution $u \sim U[0,1]$ and comparing it to the value of Probability Density Function (PDF), $\mathbb{P}(h_j | \b{v}^{(\nu)})$. If $u$ is less than or equal to this value, we have $h_j=1$; otherwise, we have $h_j=0$. Implementation of sampling $v_i$ has a similar procedure. Alternatively, we can use inverse of cumulative distribution function of these distributions for drawing samples (see \cite{ghojogh2020sampling} for more details about sampling).

\subsubsection{Generations and Evaluations by Gibbs Sampling}

Gibbs sampling for generating both observation and hidden units is used for both training and evaluation phases of RBM. Use of Gibbs sampling in training RBM will be explained in Sections \ref{section_RBM_training} and \ref{section_contrastive_divergence}. After the RBM model is trained, we can generate any number of $p$-dimensional hidden variables as a meaningful representation of the $d$-dimensional observation using Gibbs sampling. Moreover, using Gibbs sampling, we can generate other $d$-dimensional observations in addition to the original dataset. These new generated observations are $d$-dimensional representations for the $p$-dimensional hidden variables. This shows that BM and RBM are generative models. 

\SetAlCapSkip{0.5em}
\IncMargin{0.8em}
\begin{algorithm2e}[!t]
\DontPrintSemicolon
    \textbf{Input}: visible dataset $\b{v}$, (initialization: optional)\;
    Get initialization or do random initialization of $\b{v}$\;
    \While{until burn-in}{
        \For{$j$ from $1$ to $p$}{
            $h_j^{(\nu)} \sim \mathbb{P}(h_j | \b{v}^{(\nu)})$\;
        }
        \For{$i$ from $1$ to $d$}{
            $v_i^{(\nu+1)} \sim \mathbb{P}(v_i | \b{h}^{(\nu)})$\;
        }
    }
\caption{Gibbs sampling in RBM}\label{algorithm_RBM_Gibbs_sampling}
\end{algorithm2e}
\DecMargin{0.8em}

\subsection{Training Restricted Boltzmann Machine by Maximum Likelihood Estimation}\label{section_RBM_training}

The weights of links which are $\b{W}$, $\b{b}$, and $\b{c}$ should be learned so that we can use them for sampling/generating the hidden and visible units. 
Consider a dataset of $n$ visible vectors $\{\b{v}_i \in \mathbb{R}^d\}_{i=1}^n$.
Note that $\b{v}_i$ should not be confused with $v_i$ where the former is the $i$-th visible data instance and the latter is the $i$-th visible unit. 
We denote the $j$-th dimension of $\b{v}_i$ by $\b{v}_{i,j}$; in other words, $\b{v}_i = [\b{v}_{i,1}, \dots, \b{v}_{i,d}]^\top$.
The log-likelihood of the visible data is:
\begin{align}
&\ell(\b{W}, \b{b}, \b{c}) = \sum_{i=1}^n \log(\mathbb{P}(\b{v}_i)) = \sum_{i=1}^n \log\Big(\sum_{\b{h} \in \mathbb{R}^p} \mathbb{P}(\b{v}_i, \b{h})\Big) \nonumber\\
&\overset{(\ref{equation_joint_prob_v_and_h})}{=} \sum_{i=1}^n \log\Big(\sum_{\b{h} \in \mathbb{R}^p} \frac{1}{Z} \exp(-E(\b{v}_i, \b{h})) \Big) \nonumber\\
&= \sum_{i=1}^n \log\Big(\frac{1}{Z} \sum_{\b{h} \in \mathbb{R}^p} \exp(-E(\b{v}_i, \b{h})) \Big) \nonumber\\
&= \sum_{i=1}^n \Big[ \log\Big(\sum_{\b{h} \in \mathbb{R}^p} \exp(-E(\b{v}_i, \b{h})) \Big) - \log Z \Big] \nonumber\\
&= \sum_{i=1}^n \log\Big(\sum_{\b{h} \in \mathbb{R}^p} \exp(-E(\b{v}_i, \b{h})) \Big) - n \log Z \nonumber
\end{align}
\begin{align}
&\overset{(\ref{equation_partition_function})}{=} \sum_{i=1}^n \log\Big(\sum_{\b{h} \in \mathbb{R}^p} \exp(-E(\b{v}_i, \b{h})) \Big) \nonumber\\
&\quad\quad\quad - n \log \sum_{\b{v} \in \mathbb{R}^d} \sum_{\b{h} \in \mathbb{R}^p} \exp(-E(\b{v}, \b{h})).
\end{align}
We use Maximum Likelihood Estimation (MLE) for finding the parameters $\theta := \{\b{W}, \b{b}, \b{c}\}$.
The derivative of log-likelihood with respect to parameter $\theta$ is:
\begin{equation}
\begin{aligned}
\nabla_\theta \ell(\theta) = &\nabla_\theta \sum_{i=1}^n \log\Big(\sum_{\b{h} \in \mathbb{R}^p} \exp(-E(\b{v}_i, \b{h})) \Big) \\
&- n \nabla_\theta \log \sum_{\b{v} \in \mathbb{R}^d} \sum_{\b{h} \in \mathbb{R}^p} \exp(-E(\b{v}, \b{h})).
\end{aligned}
\end{equation}
The first term of this derivative is:
\begin{align}
&\nabla_\theta \sum_{i=1}^n \log\Big(\sum_{\b{h} \in \mathbb{R}^p} \exp(-E(\b{v}_i, \b{h})) \Big) \nonumber \\
&= \sum_{i=1}^n \nabla_\theta \log\Big(\sum_{\b{h} \in \mathbb{R}^p} \exp(-E(\b{v}_i, \b{h})) \Big) \nonumber\\
&= \sum_{i=1}^n \frac{\nabla_\theta \sum_{\b{h} \in \mathbb{R}^p} \exp(-E(\b{v}_i, \b{h}))}{\sum_{\b{h} \in \mathbb{R}^p} \exp(-E(\b{v}_i, \b{h}))} \nonumber\\
&= \sum_{i=1}^n \frac{\sum_{\b{h} \in \mathbb{R}^p} \exp(-E(\b{v}_i, \b{h})) \nabla_\theta (-E(\b{v}_i, \b{h}))}{\sum_{\b{h} \in \mathbb{R}^p} \exp(-E(\b{v}_i, \b{h}))} \nonumber\\
&\overset{(a)}{=} \sum_{i=1}^n \mathbb{E}_{\sim \mathbb{P}(\b{h}|\b{v}_i)}[\nabla_\theta (-E(\b{v}_i, \b{h}))], \label{equation_RBM_MLE_derivative_first_term}
\end{align}
where $(a)$ is because the definition of expectation is $\mathbb{E}_{\sim \mathbb{P}}[\b{x}] := \sum_{i=1} \mathbb{P}(\b{x}_i)\, \b{x}_i$. However, if $\mathbb{P}$ is not an actual distribution and does not sum to one, we should normalize it to behave like a distribution in the expectation: $\mathbb{E}_{\sim \mathbb{P}}[\b{x}] := (\sum_{i=1} \mathbb{P}(\b{x}_i)\, \b{x}_i) / (\sum_{i=1} \mathbb{P}(\b{x}_i))$.
The second term of the derivative of log-likelihood is:
\begin{align}
&- n \nabla_\theta \log \sum_{\b{v} \in \mathbb{R}^d} \sum_{\b{h} \in \mathbb{R}^p} \exp(-E(\b{v}, \b{h})) \nonumber\\
&= -n \frac{\nabla_\theta \sum_{\b{v} \in \mathbb{R}^d} \sum_{\b{h} \in \mathbb{R}^p} \exp(-E(\b{v}, \b{h}))}{\sum_{\b{v} \in \mathbb{R}^d} \sum_{\b{h} \in \mathbb{R}^p} \exp(-E(\b{v}, \b{h}))} \nonumber\\
&= -n \frac{\sum_{\b{v} \in \mathbb{R}^d} \sum_{\b{h} \in \mathbb{R}^p} \nabla_\theta \exp(-E(\b{v}, \b{h}))}{\sum_{\b{v} \in \mathbb{R}^d} \sum_{\b{h} \in \mathbb{R}^p} \exp(-E(\b{v}, \b{h}))} \nonumber\\
&= -n \frac{\sum_{\b{v} \in \mathbb{R}^d} \sum_{\b{h} \in \mathbb{R}^p} \exp(-E(\b{v}, \b{h})) \nabla_\theta (-E(\b{v}, \b{h}))}{\sum_{\b{v} \in \mathbb{R}^d} \sum_{\b{h} \in \mathbb{R}^p} \exp(-E(\b{v}, \b{h}))} \nonumber\\
&\overset{(a)}{=} -n\, \mathbb{E}_{\sim \mathbb{P}(\b{h},\b{v})}[\nabla_\theta (-E(\b{v}, \b{h}))], \label{equation_RBM_MLE_derivative_second_term}
\end{align}
where $(a)$ is for the definition of expectation which was already explained above. 
In summary, the derivative of log-likelihood is:
\begin{equation}\label{equation_RBM_loglikelihood_derivative}
\begin{aligned}
\nabla_\theta \ell(\theta) = &\sum_{i=1}^n \mathbb{E}_{\sim \mathbb{P}(\b{h}|\b{v}_i)}[\nabla_\theta (-E(\b{v}_i, \b{h}))] \\
&-n\, \mathbb{E}_{\sim \mathbb{P}(\b{h},\b{v})}[\nabla_\theta (-E(\b{v}, \b{h}))].
\end{aligned}
\end{equation}
Setting this derivative to zero does not give us a closed-form solution. Hence, we should learn the parameters iteratively using gradient ascent for MLE. 

Now, consider each of the parameters $\theta = \{\b{W}, \b{b}, \b{c}\}$. The derivative w.r.t. these parameters in Eq. (\ref{equation_RBM_loglikelihood_derivative}) are:
\begin{align*}
&\nabla_{\b{W}} (-E(\b{v}, \b{h})) \overset{(\ref{equation_energy_function})}{=} \frac{\partial}{\partial \b{W}} (\b{b}^\top \b{v} + \b{c}^\top \b{h} + \b{v}^\top \b{W} \b{h}) = \b{v} \b{h}^\top, \\
&\nabla_{\b{b}} (-E(\b{v}, \b{h})) \overset{(\ref{equation_energy_function})}{=} \frac{\partial}{\partial \b{b}} (\b{b}^\top \b{v} + \b{c}^\top \b{h} + \b{v}^\top \b{W} \b{h}) = \b{v}, \\
&\nabla_{\b{c}} (-E(\b{v}, \b{h})) \overset{(\ref{equation_energy_function})}{=} \frac{\partial}{\partial \b{c}} (\b{b}^\top \b{v} + \b{c}^\top \b{h} + \b{v}^\top \b{W} \b{h}) = \b{h}.
\end{align*}
Therefore, Eq. (\ref{equation_RBM_loglikelihood_derivative}) for these parameters becomes:
\begin{align*}
\nabla_{\b{W}} \ell(\theta) &= \sum_{i=1}^n \mathbb{E}_{\sim \mathbb{P}(\b{h}|\b{v}_i)}[\b{v} \b{h}_i^\top] -n\, \mathbb{E}_{\sim \mathbb{P}(\b{h},\b{v})}[\b{v} \b{h}^\top] \\
&= \sum_{i=1}^n \b{v}_i\, \mathbb{E}_{\sim \mathbb{P}(\b{h}|\b{v}_i)}[\b{h}^\top] -n\, \mathbb{E}_{\sim \mathbb{P}(\b{h},\b{v})}[\b{v} \b{h}^\top],
\end{align*}
\begin{align*}
\nabla_{\b{b}} \ell(\theta) &= \sum_{i=1}^n \mathbb{E}_{\sim \mathbb{P}(\b{h}|\b{v}_i)}[\b{v}_i] -n\, \mathbb{E}_{\sim \mathbb{P}(\b{h},\b{v})}[\b{v}] \\
&= \sum_{i=1}^n \b{v}_i -n\, \mathbb{E}_{\sim \mathbb{P}(\b{h},\b{v})}[\b{v}],
\end{align*}
\begin{align*}
\nabla_{\b{c}} \ell(\theta) &= \sum_{i=1}^n \mathbb{E}_{\sim \mathbb{P}(\b{h}|\b{v}_i)}[\b{h}] -n\, \mathbb{E}_{\sim \mathbb{P}(\b{h},\b{v})}[\b{h}].
\end{align*}
If we define:
\begin{align}\label{equation_h_hat}
\widehat{\b{h}}_i := \mathbb{E}_{\sim \mathbb{P}(\b{h}|\b{v}_i)}[\b{h}],
\end{align}
we can summarize these derivatives as:
\begin{align}
& \mathbb{R}^{d \times p} \ni \nabla_{\b{W}} \ell(\theta) = \sum_{i=1}^n \b{v}_i \widehat{\b{h}}_i^\top -n\, \mathbb{E}_{\sim \mathbb{P}(\b{h},\b{v})}[\b{v} \b{h}^\top], \label{equation_RBM_derivative_W} \\
& \mathbb{R}^d \ni \nabla_{\b{b}} \ell(\theta) = \sum_{i=1}^n \b{v}_i -n\, \mathbb{E}_{\sim \mathbb{P}(\b{h},\b{v})}[\b{v}], \label{equation_RBM_derivative_b} \\
& \mathbb{R}^p \ni \nabla_{\b{c}} \ell(\theta) = \sum_{i=1}^n \widehat{\b{h}}_i -n\, \mathbb{E}_{\sim \mathbb{P}(\b{h},\b{v})}[\b{h}]. \label{equation_RBM_derivative_c}
\end{align}
Setting these derivatives to zero does not give a closed form solution. Hence, we need to find the solution iteratively using gradient descent where the above gradients are used.  
In the derivatives of log-likelihood, we have two types of expectation. The conditional expectation $\mathbb{E}_{\sim \mathbb{P}(\b{h}|\b{v}_i)}[.]$ is based on the observation or data which is $\b{v}_i$. The joint expectation $\mathbb{E}_{\sim \mathbb{P}(\b{h},\b{v})}[.]$, however, has nothing to do with the observation and is merely about the RBM model. 

\subsection{Contrastive Divergence}\label{section_contrastive_divergence}

According to Eq. (\ref{equation_RBM_MLE_derivative_first_term}), the conditional expectation used in Eq. (\ref{equation_h_hat}) includes one summation. Moreover, according to Eq. (\ref{equation_RBM_MLE_derivative_second_term}), the joint expectations used in Eqs. (\ref{equation_RBM_derivative_W}), (\ref{equation_RBM_derivative_b}), and (\ref{equation_RBM_derivative_c}) contain two summations. This double-summation makes computation of the joint expectation intractable because it sums over all possible values for both hidden and visible units. Therefore, exact computation of MLE is hard and we should approximate it. One way to approximate computation of joint expectations in MLE is \textit{contrastive divergence} \cite{hinton2002training}. 
Contrastive divergence improves the efficiency and reduces the variance of estimation in RBM \cite{hinton2002training,welling2004exponential}. 

The idea of contrastive divergence is as follows. First, we obtain a point $\widetilde{\b{v}}$ using Gibbs sampling starting from the observation $\b{v}_i$ (see Section \ref{section_RBM_Gibbs_sampling} for Gibbs sampling in RBM). Then, we compute expectation by using only that one point $\widetilde{\b{v}}$. 
The intuitive reason for why contrastive divergence works is explained in the following. 
We need to minimize the gradients to find the solution of MLE. In the joint expectations in Eqs. (\ref{equation_RBM_derivative_W}), (\ref{equation_RBM_derivative_b}), and (\ref{equation_RBM_derivative_c}), rather than considering all possible values of observations, contrastive divergence considers only one of the data points (observations). If this observation is a wrong belief which we do not wish to see in generation of observations by RBM, contrastive divergence is performing a task which is called \textit{negative sampling} \cite{hinton2012practical}. In negative sampling, we say rather than training the model to not generate all wrong observations, we train it iteratively but less ambitiously in every iteration. Each iteration tries to teach the model to not generate only one of the wrong outputs. Gradually, the model learns to generate correct observations by avoiding to generate these negative samples. 

Let $\widetilde{\b{h}} = [\widetilde{h}_1, \dots, \widetilde{h}_m]^\top$ be the corresponding sampled $\b{h}$ to $\widetilde{\b{v}} = [\widetilde{v}_1, \dots, \widetilde{v}_m]^\top$ in Gibbs sampling. According to the above explanations, contrastive divergence approximates the joint expectation in the derivative of log-likelihood, Eq. (\ref{equation_RBM_loglikelihood_derivative}), by Monte-Carlo approximation \cite{ghojogh2020sampling} evaluated at $\widetilde{\b{v}}_i$ and $\widetilde{\b{h}}_i$ for the $i$-th observation and hidden units where $\widetilde{\b{v}}_i$ and $\widetilde{\b{h}}_i$ are found by Gibbs sampling. Hence:
\begin{align}\label{equation_contrastive_divergence_MC_approximation}
&\mathbb{E}_{\sim \mathbb{P}(\b{h},\b{v})}[\nabla_\theta (-E(\b{v}, \b{h}))] \approx \nonumber\\
&~~~~~~~~~~~~~~~~~~~ \frac{1}{n} \sum_{i=1}^n \nabla_\theta (-E(\b{v}_i, \b{h}_i)) \Big|_{\b{v}_i=\widetilde{\b{v}}_i, \b{h}_i=\widetilde{\b{h}}_i}.
\end{align}
Experiments have shown that a small number of iterations in Gibbs sampling suffice for contrastive divergence. Paper \cite{hinton2002training} even uses one iteration of Gibbs sampling for this task. This small number of required iterations has the support of literature because Gibbs sampling is a special case of Metropolis-Hastings algorithms \cite{ghojogh2020sampling} which are fast \cite{dwivedi2018log}. 

\SetAlCapSkip{0.5em}
\IncMargin{0.8em}
\begin{algorithm2e}[!t]
\DontPrintSemicolon
    \textbf{Input: } training data $\{\b{x}_i\}_{i=1}^n$\;
    Randomly initialize $\b{W}, \b{b}, \b{c}$\;
    \While{not converged}{
        Sample a mini-batch $\{\b{v}_1, \dots, \b{v}_m\}$ from training dataset $\{\b{x}_i\}_{i=1}^n$ (n.b. we may set $m=n$)\;
        // Gibbs sampling for each data point: \;
        Initialize $\widehat{\b{v}}_i^{(0)} \gets \b{v}_i$ for all $i \in \{1, \dots, m\}$\;
        \For{$i$ from $1$ to $m$}{
            Algorithm \ref{algorithm_RBM_Gibbs_sampling} $\gets \widehat{\b{v}}_i^{(0)}$ \;
            $\{h_i\}_{i=1}^p, \{v_i\}_{i=1}^d \gets $ Last iteration of Algorithm \ref{algorithm_RBM_Gibbs_sampling}\;
            $\widetilde{\b{h}}_i \gets [h_1, \dots, h_p]^\top$\;
            $\widetilde{\b{v}}_i \gets [v_1, \dots, v_d]^\top$\;
            $\widehat{\b{h}}_i \gets \mathbb{E}_{\sim \mathbb{P}(\b{h}|\b{v}_i)}[\b{h}]$\;
        }
        // gradients:\;
        $\nabla_{\b{W}} \ell(\theta) \gets \sum_{i=1}^m \b{v}_i \widehat{\b{h}}_i^\top -\sum_{i=1}^m \widetilde{\b{h}}_i \widetilde{\b{v}}_i^\top$\;
        $\nabla_{\b{b}} \ell(\theta) \gets \sum_{i=1}^m \b{v}_i -\sum_{i=1}^m \widetilde{\b{v}}_i$\;
        $\nabla_{\b{c}} \ell(\theta) \gets \sum_{i=1}^m \widehat{\b{h}}_i -\sum_{i=1}^m \widetilde{\b{h}}_i$\;
        // gradient descent for updating solution: \\
        $\b{W} \gets \b{W} - \eta \nabla_{\b{W}} \ell(\theta)$\;
        $\b{b} \gets \b{b} - \eta \nabla_{\b{b}} \ell(\theta)$\;
        $\b{c} \gets \b{c} - \eta \nabla_{\b{c}} \ell(\theta)$\;
    }
    \textbf{Return} $\b{W}, \b{b}, \b{c}$\;
\caption{Training RBM using contrastive divergence}\label{algorithm_RBM_training_using_contrastive_divergence}
\end{algorithm2e}
\DecMargin{0.8em}

By the approximation in Eq. (\ref{equation_contrastive_divergence_MC_approximation}), the Eqs. (\ref{equation_RBM_derivative_W}), (\ref{equation_RBM_derivative_b}), and (\ref{equation_RBM_derivative_c}) become:
\begin{align}
& \nabla_{\b{W}} \ell(\theta) = \sum_{i=1}^n \b{v}_i \widehat{\b{h}}_i^\top - \sum_{i=1}^n \widetilde{\b{v}}_i \widetilde{\b{h}}_i^\top, \label{equation_RBM_derivative_W_2} \\
& \nabla_{\b{b}} \ell(\theta) = \sum_{i=1}^n \b{v}_i - \sum_{i=1}^n \widetilde{\b{v}}_i, \label{equation_RBM_derivative_b_2} \\
& \nabla_{\b{c}} \ell(\theta) = \sum_{i=1}^n \widehat{\b{h}}_i - \sum_{i=1}^n \widetilde{\b{h}}_i. \label{equation_RBM_derivative_c_2}
\end{align}
These equations make sense because when the observation variable and hidden variable given the observation variable become equal to the approximations by Gibbs sampling, the gradient should be zero and the training should stop. 
Note that some works in the literature restate Eqs. (\ref{equation_RBM_derivative_W_2}), (\ref{equation_RBM_derivative_b_2}), and (\ref{equation_RBM_derivative_c_2}) as \cite{hinton2002training,hinton2012practical,taylor2007modeling}:
\begin{align}
& \forall i,j\!: \, \nabla_{w_{ij}} \ell(\theta) = \langle v_i h_j \rangle_\text{data} - \langle v_i h_j \rangle_\text{recon.}, \label{equation_RBM_derivative_W_3} \\
& \forall i\!: \quad \nabla_{b_i} \ell(\theta) = \langle v_i \rangle_\text{data} - \langle v_i \rangle_\text{recon.}, \label{equation_RBM_derivative_b_3} \\
& \forall j\!: \quad \nabla_{c_j} \ell(\theta) = \langle h_j \rangle_\text{data} - \langle h_j \rangle_\text{recon.}, \label{equation_RBM_derivative_c_3}
\end{align}
where $\langle . \rangle_\text{data}$ and $\langle . \rangle_\text{recon.}$ denote expectation over data and reconstruction of data, respectively. 

The training algorithm of RBM, using contrastive divergence, can be found in Algorithm \ref{algorithm_RBM_training_using_contrastive_divergence}. In this algorithm, we are using mini-batch gradient descent with the batch size $m$. If training dataset is not large, one can set $m=n$ to have gradient descent. This algorithm is iterative until convergence where, in every iteration, a mini-batch is sampled where we have an observation $\b{v}_i \in \mathbb{R}^d$ and a hidden variable $\b{h}_i \in \mathbb{R}^p$ for every $i$-th training data point. For every data point, we apply Gibbs sampling as shown in Algorithm \ref{algorithm_RBM_Gibbs_sampling}. After Gibbs sampling, gradients are calculated by Eqs. (\ref{equation_RBM_derivative_W_2}), (\ref{equation_RBM_derivative_b_2}), and (\ref{equation_RBM_derivative_c_2}) and then the variables are updated using a gradient descent step.

\subsection{Boltzmann Machine}

So far, we introduced and explained RBM. BM has more links compared to RBM \cite{salakhutdinov2009deep}. 
Here, in parentheses, we briefly introduce training of BM.
Its structure is depicted in Fig. \ref{figure_BM_and_RBM}. As was explained in Section \ref{section_RBM_structure}, BM has additional links $\b{L} = [l_{ij}] \in \mathbb{R}^{d \times d}$ and $\b{J} = [j_{ij}] \in \mathbb{R}^{p \times p}$. 
The weights $\b{W} \in \mathbb{R}^{d \times p}$ and biases $\b{b} \in \mathbb{R}^d$ and $\b{c} \in \mathbb{R}^p$ are trained by gradient descent using the gradients in Eqs. (\ref{equation_RBM_derivative_W_2}), (\ref{equation_RBM_derivative_b_2}), and (\ref{equation_RBM_derivative_c_2}). The additional weights $\b{L}$ and $\b{J}$ are updated similarly using the following gradients \cite{salakhutdinov2009deep}:
\begin{align}
& \nabla_{\b{L}} \ell(\theta) = \sum_{i=1}^n \b{v}_i \b{v}_i^\top - \sum_{i=1}^n \widetilde{\b{v}}_i \widetilde{\b{v}}_i^\top, \label{equation_BM_derivative_L} \\
& \nabla_{\b{J}} \ell(\theta) = \sum_{i=1}^n \mathbb{E}_{\sim \mathbb{P}(\b{h}|\b{v}_i)}[\b{h} \b{h}^\top] - \sum_{i=1}^n \widetilde{\b{h}}_i \widetilde{\b{h}}_i^\top. \label{equation_BM_derivative_J} 
\end{align}
These equations can be restated as:
\begin{align}
& \forall i,j\!: \, \nabla_{l_{ij}} \ell(\theta) = \langle v_i v_j \rangle_\text{data} - \langle v_i v_j \rangle_\text{recon.}, \label{equation_BM_derivative_L_2} \\
& \forall i,j\!: \, \nabla_{j_{ij}} \ell(\theta) = \langle h_i h_j \rangle_\text{data} - \langle h_i h_j \rangle_\text{recon.}, \label{equation_BM_derivative_J_2}
\end{align}
where $\langle . \rangle_\text{data}$ and $\langle . \rangle_\text{recon.}$ denote expectation over data and reconstruction of data, respectively.

\section{Distributions of Visible and Hidden Variables}\label{section_distributions_of_variables}

\subsection{Modeling with Exponential Family Distributions}

According to Proposition \ref{proposition_RBM_conditional_independence}, the units $\b{v} \in \mathbb{R}^d$ and $\b{h} \in \mathbb{R}^p$ have conditional independence so their distribution is the product of each conditional distribution.
We can choose distributions from the exponential family of distributions for the visible and hidden variables \cite{welling2004exponential}: 
\begin{align}
& \mathbb{P}(\b{v}) = \prod_{i=1}^d r_i(v_i) \exp\Big( \sum_{a} \theta_{ia}\, f_{ia}(v_i) - A_i(\{\theta_{ia}\}) \Big), \\
& \mathbb{P}(\b{h}) = \prod_{j=1}^p s_j(h_j) \exp\Big( \sum_{b} \lambda_{jb}\, g_{jb}(h_j) - B_j(\{\lambda_{jb}\}) \Big),
\end{align}
where $\{f_{ia}(v_i), g_{jb}(h_j)\}$ are the sufficient statistics, $\{\theta_i, \lambda_j\}$ are the canonical parameters of the models, $\{A_i, B_j\}$ are the log-normalization factors, and $\{r_i(v_i), s_j(h_j)\}$ are the normalization factors which are some additional features multiplied by some constants. 
We can ignore the log-normalization factors because they are hard to compute. 

For the joint distribution of visible and hidden variables, we should introduce a quadratic term for their cross-interaction \cite{welling2004exponential}: 
\begin{align}\label{equation_RBM_exponentialFamily_joint}
\mathbb{P}(\b{v}, \b{h}) &\propto \exp\Big( \sum_{i=1}^d \sum_{a} \theta_{ia}\, f_{ia}(v_i) \nonumber \\
&+ \sum_{j=1}^p \sum_{b} \lambda_{jb}\, g_{jb}(h_j) \nonumber \\
& + \sum_{i=1}^d \sum_{j=1}^p \sum_{a} \sum_{b} \b{W}_{ia}^{jb}\, f_{ia}(v_i)\, g_{jb}(h_j) \Big).
\end{align}
According to Proposition \ref{proposition_RBM_conditional_independence}, the visible and hidden units have conditional independence. Therefore, the conditional distributions can be written as multiplication of exponential family distributions \cite{welling2004exponential}:
\begin{align}
& \mathbb{P}(\b{v}|\b{h}) = \prod_{i=1}^d \exp\Big( \sum_{a} \widehat{\theta}_{ia}\, f_{ia}(v_i) - A_i(\{\widehat{\theta}_{ia}\}) \Big), \\
& \mathbb{P}(\b{h}|\b{v}) = \prod_{j=1}^p \exp\Big(\! \sum_{b} \widehat{\lambda}_{jb}\, g_{jb}(h_j) - B_j(\{\widehat{\lambda}_{jb}\}) \Big),
\end{align}
where:
\begin{align}
& \widehat{\theta}_{ia} := \theta_{ia} + \sum_{j=1}^p \sum_b \b{W}_{ia}^{jb}\, g_{jb}(h_j), \\
& \widehat{\lambda}_{jb} := \lambda_{jb} + \sum_{i=1}^d \sum_a \b{W}_{ia}^{jb}\, f_{ia}(v_i).
\end{align}
Therefore, we can choose one of the distributions in the exponential family for the conditional distributions of visible and hidden variables. In the following, we introduce different cases where the units can have either discrete or continuous values. In all cases, the distributions are from exponential families. 

\subsection{Binary States}\label{sectiobn_RBM_binary_states}

The hidden and visible variables can have discrete number of values, also called states. Most often, inspired by the Hopfield network, BM and RBM have binary states. In this case, the hidden and visible units can have binary states, i.e., $v_i, h_j \in \{0, 1\}, \forall i,j$. Hence, we can say:
\begin{align}\label{equation_binaryStates_prob_h_j_one_and_v}
\mathbb{P}(h_j=1 | \b{v}) &= \frac{\mathbb{P}(h_j=1, \b{v})}{\mathbb{P}(h_j=0, \b{v}) + \mathbb{P}(h_j=1, \b{v})}.
\end{align}
In binary states, the joint probability in Eq. (\ref{equation_prob_h_j_and_v}) is simplified to:
\begin{align*}
&\mathbb{P}(h_j = 0, \b{v}) = \exp( c_j \times 0 + \b{v}^\top \b{W}_{:j} \times 0) \\
&~~~~~~~~~~~~~~~~~~~~~ = \exp(0) = 1, \\
&\mathbb{P}(h_j = 1, \b{v}) = \exp( c_j \times 1 + \b{v}^\top \b{W}_{:j} \times 1) \\
&~~~~~~~~~~~~~~~~~~~~~ = \exp( c_j + \b{v}^\top \b{W}_{:j}).
\end{align*}
Hence, Eq. (\ref{equation_binaryStates_prob_h_j_one_and_v}) becomes:
\begin{align}
&\mathbb{P}(h_j=1 | \b{v}) = \frac{\exp( c_j + \b{v}^\top \b{W}_{:j})}{1 + \exp( c_j + \b{v}^\top \b{W}_{:j})} \nonumber \\
&= \frac{1}{1 + \exp\big(\!-(c_j + \b{v}^\top \b{W}_{:j})\big)} = \sigma(c_j + \b{v}^\top \b{W}_{:j}), \label{equation_binary_states_h_given_v}
\end{align}
where:
\begin{align*}
\sigma(x) := \frac{\exp(x)}{1 + \exp(x)} = \frac{1}{1 + \exp(-x)},
\end{align*}
is the sigmoid (or logistic) function and $\b{W}_{:j} \in \mathbb{R}^d$ denotes the $j$-th column of matrix $\b{W}$.
If the visible units also have binary states, we will similarly have:
\begin{align}\label{equation_binary_states_v_given_h}
&\mathbb{P}(v_i=1 | \b{h}) = \sigma(b_i + \b{W}_{i:}\, \b{h}),
\end{align}
where $\b{W}_{i:} \in \mathbb{R}^p$ denotes the $i$-th row of matrix $\b{W}$.
As we have only two states $\{0,1\}$, from Eqs. (\ref{equation_binary_states_h_given_v}) and (\ref{equation_binary_states_v_given_h}), we have:
\begin{align}
& \mathbb{P}(h_j | \b{v}) = \sigma(c_j + \b{v}^\top \b{W}_{:j}) = \sigma(c_j + \sum_{i=1}^d v_i\, w_{ij}), \label{equation_binary_states_h_given_v_2}\\
& \mathbb{P}(v_i | \b{h}) = \sigma(b_i + \b{W}_{i:}\, \b{h}) = \sigma(b_i + \sum_{j=1}^p w_{ij}\, h_j). \label{equation_binary_states_v_given_h_2}
\end{align}
According to Proposition \ref{proposition_RBM_conditional_independence}, the units have conditional independence so their distribution is the product of each conditional distribution:
\begin{align}
&\mathbb{P}(\b{h} | \b{v}) = \prod_{j=1}^p \mathbb{P}(h_j | \b{v}) = \prod_{j=1}^p \sigma(c_j + \b{v}^\top \b{W}_{:j}), \label{equation_binary_states_h_given_v_3} \\ &\mathbb{P}(\b{v} | \b{h}) = \prod_{i=1}^d \mathbb{P}(v_i | \b{h}) = \prod_{i=1}^d \sigma(b_i + \b{W}_{i:}\, \b{h}). \label{equation_binary_states_v_given_h_3}
\end{align}
Therefore, in Gibbs sampling of Algorithm \ref{algorithm_RBM_Gibbs_sampling}, we sample from the distributions of Eqs. (\ref{equation_binary_states_h_given_v_2}) and (\ref{equation_binary_states_v_given_h_2}).
Note that the sigmoid function is between zero and one so we can use the uniform distribution $u \sim U[0,1]$ for sampling from it. This was explained in Section \ref{section_RBM_Gibbs_sampling}. 
Moreover, for binary states, we have $\mathbb{E}_{\sim \mathbb{P}(h_j|\b{v})}[h_j] = \sigma(c_j + \b{v}^\top \b{W}_{:j})$. Hence, if we apply the sigmoid function element-wise on elements of $\widehat{\b{h}}_i \in \mathbb{R}^p$, we can have this for Eq. (\ref{equation_h_hat}) in training binary-state RBM:
\begin{align}
\widehat{\b{h}}_i = \mathbb{E}_{\sim \mathbb{P}(\b{h}|\b{v}_i)}[\b{h}] = \sigma(\b{c} + \b{v}_i^\top \b{W}).
\end{align}
This equation is also used in Algorithm \ref{algorithm_RBM_training_using_contrastive_divergence}.

\subsection{Continuous Values}

In some cases, we set the hidden units to have continuous values as continuous representations for the visible unit. According to the definition of conditional probability, we have:
\begin{align}\label{equation_continuousStates_prob_h_j_one_and_v}
\mathbb{P}(v_i=1 | \b{h}) &= \frac{\mathbb{P}(v_i=1, \b{h})}{\sum_{v_i \in \mathbb{R}^d} \mathbb{P}(\b{h}, v_i)}.
\end{align}
According to Eq. (\ref{equation_prob_h_and_v_i}), we have:
\begin{align}
\mathbb{P}(\b{h}, v_i=1) &= \exp( b_i \times 1 + 1 \times \b{W}_{i:}\, \b{h}) \nonumber \\
&= \exp( b_i + \b{W}_{i:}\, \b{h}). 
\end{align}
Hence, Eq. (\ref{equation_continuousStates_prob_h_j_one_and_v}) becomes:
\begin{align}
\mathbb{P}(v_i=1 | \b{h}) &= \frac{\exp( b_i + \b{W}_{i:}\, \b{h})}{\sum_{v_i \in \mathbb{R}^d} \mathbb{P}(\b{h}, v_i)}.
\end{align}
This is a softmax function which can approximate a Gaussian (normal) distribution and it sums to one. Therefore, we can write it as the normal distribution with variance one:
\begin{align}
\mathbb{P}(v_i| \b{h}) &= \mathcal{N}(b_i + \b{W}_{i:}\, \b{h}, 1) = \mathcal{N}(b_i + \sum_{j=1}^p w_{ij} h_j, 1).
\end{align}
According to Proposition \ref{proposition_RBM_conditional_independence}, the units have conditional independence so their distribution is the product of each conditional distribution:
\begin{align}
\mathbb{P}(\b{v}| \b{h}) = \prod_{i=1}^d \mathbb{P}(v_i| \b{h}) &= \prod_{i=1}^d \mathcal{N}(b_i + \b{W}_{i:}\, \b{h}, 1).
\end{align}
Usually, when the hidden units have continuous values, the visible units have binary states \cite{welling2004exponential,mohamed2010phone}. In this case, the conditional distribution $\mathbb{P}(\b{h}| \b{v})$ is obtained by Eq. (\ref{equation_binary_states_h_given_v_3}). If the visible units have continuous values, their distribution can be similarly calculated as:
\begin{align}
\mathbb{P}(\b{h}| \b{v}) = \prod_{j=1}^p \mathbb{P}(h_j| \b{v}) &= \prod_{j=1}^p \mathcal{N}(c_j + \b{v}^\top \b{W}_{:j}, 1).
\end{align}
These normal distributions can be used for sampling in Gibbs sampling of Algorithm \ref{algorithm_RBM_Gibbs_sampling}. 
In this case, Eq. (\ref{equation_h_hat}), used in Algorithm \ref{algorithm_RBM_training_using_contrastive_divergence} for training RBM, is:
\begin{align}
\widehat{\b{h}}_i = \mathbb{E}_{\sim \mathbb{P}(\b{h}|\b{v}_i)}[\b{h}] = \mathcal{N}(\b{c} + \b{v}_i^\top \b{W}, \b{I}_{p \times p}),
\end{align}
where $\b{I}$ denotes the identity matrix. 

\subsection{Discrete Poisson States}

In some cases, the units have discrete states but with more than two values which was discussed in Section \ref{sectiobn_RBM_binary_states}. In this case, we can use the well-known Poisson distribution for discrete random variables:
\begin{align*}
\text{Ps}(t, \lambda) = \frac{e^{-\lambda} \lambda^t}{t!}.
\end{align*}
Assume every visible unit can have a value $t \in \{0,1,2,3,\dots\}$.
If we consider the conditional Poisson distribution for the visible units, we have \cite{salakhutdinov2009semantic}:
\begin{align}
\mathbb{P}(v_i=t | \b{h}) = \text{Ps}(t, \frac{\exp(b_i + \b{W}_{i:}\, \b{h})}{\sum_{k=1}^d \exp(b_k + \b{W}_{k:}\, \b{h})}).
\end{align}
Similarly, if we have discrete states for the hidden units, we can have:
\begin{align}
\mathbb{P}(h_j=t | \b{v}) = \text{Ps}(t, \frac{\exp(c_j + \b{v}^\top \b{W}_{:j})}{\sum_{k=1}^p \exp(c_k + \b{v}^\top \b{W}_{:k})}),
\end{align}
These Poisson distributions can be used for sampling in Gibbs sampling of Algorithm \ref{algorithm_RBM_Gibbs_sampling}. 
In this case, Eq. (\ref{equation_h_hat}), used in Algorithm \ref{algorithm_RBM_training_using_contrastive_divergence} for training RBM, can be calculated using a multivariate Poisson distribution \cite{edwards1962multivariate}. 
It is noteworthy that RBM has been used for \textit{semantic hashing} where the hidden variables are used as a hashing representation of data \cite{salakhutdinov2009semantic}. Semantic hashing uses Poisson distribution and sigmoid function for the conditional visible and hidden variables, respectively. 







\section{Conditional Restricted Boltzmann Machine}\label{section_conditional_RBM}








\begin{figure*}[!t]
\centering
\includegraphics[width=5.5in]{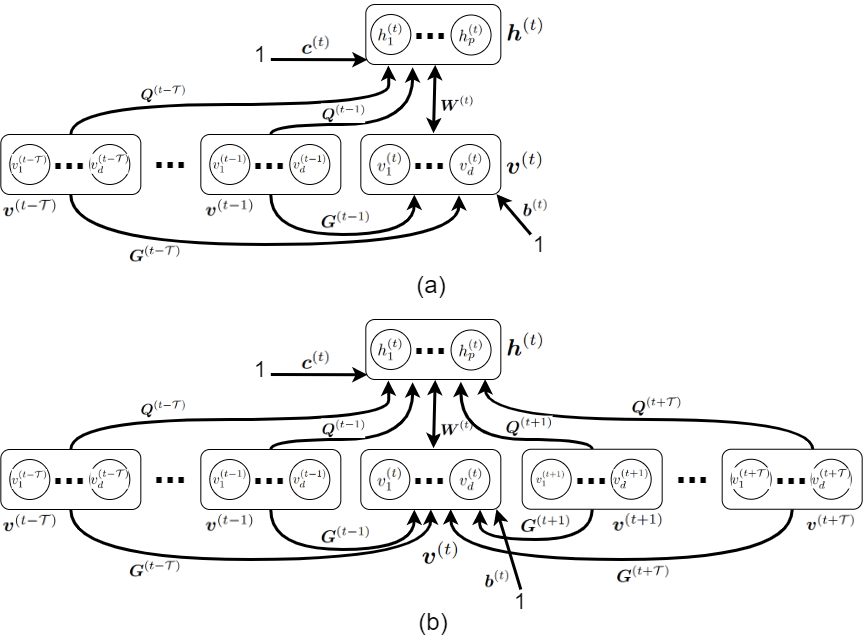}
\caption{The structures of (a) a conditional Boltzmann machine and (b) an interpolating conditional restricted Boltzmann machine. Note that each arrow in this figure represents a set of links.}
\label{figure_CRBM_and_ICRBM}
\end{figure*}

If data are a time-series, RBM does not include its temporal (time) information. In other words, RBM is suitable for static data. Conditional RBM (CRBM), proposed in \cite{taylor2007modeling}, incorporates the temporal information into the configuration of RBM. 
It considers visible variables of previous time steps as conditional variables.
CRBM adds two sets of directed links to RBM. The first set of directed links is the autoregressive links from the past $\mathcal{T}_1$ visible units to the visible units of current time step. The second set of directed links is the links from the past $\tau_2$ visible units to the hidden units of the current time step. In general, $\mathcal{T}_1$ is not necessarily equal to $\mathcal{T}_2$ but for simplicity, we usually set $\mathcal{T}_1 = \mathcal{T}_2 = \mathcal{T}$ \cite{taylor2007modeling}. We denote the links from the visible units at time $t-\tau$ to visible units at current time by $\b{G}^{(t-\tau)} = [g_{ij}] \in \mathbb{R}^{d \times d}$. We also denote the links from the visible units at time $t-\tau$ to hidden units at current time by $\b{Q}^{(t-\tau)} = [q_{ij}] \in \mathbb{R}^{d \times p}$.
The structure of CRBM is shown in Fig. \ref{figure_CRBM_and_ICRBM}. Note that each arrow in this figure is a set of links representing a matrix or vector of weights.

The updating rule for the weights $\b{W}$ and biases $\b{b}$ and $\b{c}$ are the same as in Eqs. (\ref{equation_RBM_derivative_W_2}), (\ref{equation_RBM_derivative_b_2}), and (\ref{equation_RBM_derivative_c_2}). We consider the directed links from the previous visible units to current visible units and current hidden units as dynamically changing biases. Recall that for updating the biases $\b{b}$ and $\b{c}$, we used Eqs. (\ref{equation_RBM_derivative_b_2}) and (\ref{equation_RBM_derivative_c_2}). 
Similarly, for updating the added links from $\tau$ previous time steps, we have:
\begin{align}
& \forall i\!: \mathbb{R}^d \ni \nabla_{\b{G}_{i:}^{(t-\tau)}} \ell(\theta) = v_i^{(t-\tau)} \Big(\sum_{k=1}^n \b{v}_k^{(t)} - \sum_{k=1}^n \widetilde{\b{v}}_k^{(t)}\Big), \label{equation_CRBM_derivative_b_2} \\
& \forall i\!: \mathbb{R}^p \ni \nabla_{\b{Q}_{i:}^{(t-\tau)}} \ell(\theta) = v_i^{(t-\tau)} \Big(\sum_{k=1}^n \widehat{\b{h}}_k - \sum_{k=1}^n \widetilde{\b{h}}_k\Big), \label{equation_CRBM_derivative_c_2}
\end{align}
where $i \in \{1, \dots, d\}$, $\tau \in \{1, 2, \dots, \mathcal{T}\}$, $\b{G}_{i:}^{(t-\tau)}$ denotes the $i$-th row of $\b{G}^{(t-\tau)}$, and $\b{Q}_{i:}^{(t-\tau)}$ denotes the $i$-th row of $\b{Q}^{(t-\tau)}$.
These equations are similar to Eqs. (\ref{equation_RBM_derivative_b_2}) and (\ref{equation_RBM_derivative_c_2}) but they are multiplied by the visible values in previous time steps. This is because RBM multiplies biases with one (see Fig. \ref{figure_BM_and_RBM}) while these newly introduced biases have previous visible values instead of one (see Fig. \ref{figure_CRBM_and_ICRBM}). 
These equations can be restated as \cite{taylor2007modeling}:
\begin{align}
& \forall i,j\!: \, \nabla_{g_{ij}^{(t-\tau)}} \ell(\theta) = v_i^{(t-\tau)} \Big(\langle v_i^{(t)} \rangle_\text{data} - \langle v_i^{(t)} \rangle_\text{recon.}\Big), \label{equation_CRBM_derivative_b_3} \\
& \forall i,j\!: \, \nabla_{q_{ij}^{(t-\tau)}} \ell(\theta) = v_i^{(t-\tau)} \Big(\langle h_j^{(t)} \rangle_\text{data} - \langle h_j^{(t)} \rangle_\text{recon.}\Big), \label{equation_CRBM_derivative_c_3}
\end{align}
where $\langle . \rangle_\text{data}$ and $\langle . \rangle_\text{recon.}$ denote expectation over data and reconstruction of data, respectively. 
In addition to the weights and biases of RBM, the additional links are learned by gradient descent using the above gradients. 
Algorithm \ref{algorithm_RBM_training_using_contrastive_divergence} can be used for training CRBM if learning the added links is also included in the algorithm.

Interpolating CRBM (ICRBM) \cite{mohamed2010phone} is an improvement over the CRBM where some links have been added from visible variables in the future.
Figure \ref{figure_CRBM_and_ICRBM} depicts the structure of ICRBM.
Its training and formulation are similar but we do not cover its theory in this paper for the sake of brevity. 
Note that CRBM has been used in various time-series applications such as action recognition \cite{taylor2007modeling} and acoustics \cite{mohamed2010phone}.

\section{Deep Belief Network}\label{section_DBN}

\subsection{Stacking RBM Models}

We can train a neural network using RBM training \cite{hinton2006reducing,hinton2006fast}. Training a neural network using RBM training can result in very good initialization of weights for training network using backpropagation. Before the development of ReLU \cite{glorot2011deep} and dropout \cite{srivastava2014dropout}, multilayer perceptron networks could not become deep for the problem of vanishing gradients. This was because random initial weights were not suitable enough for starting optimization in backpropagation, especially in deep networks. Therefore, a method was proposed for pre-training neural networks which initializes network to a suitable set of weights and then the pre-trained weights are fine-tuned using backpropagation \cite{hinton2006reducing,hinton2006fast}.

A neural network consists of several layers. Let $\ell$ denote the number of layers, where the first layer gets the input data, and let $p_\ell$ be the number of neurons in the $\ell$-th layer. By convention, we have $p_1 = d$. We can consider every two successive layers as one RBM. This is shown in Fig. \ref{figure_DBN}. We start from the first pair of layers as an RBM and we introduce training dataset $\{\b{x}_i \in \mathbb{R}^d\}_{i=1}^n$ as the visible variable $\{\b{v}_i\}_{i=1}^n$ of the first pair of layers. We train the weights and biases of this first layer as an RBM using Algorithm \ref{algorithm_RBM_training_using_contrastive_divergence}. After training this RBM, we generate $n$ $p_2$-dimensional hidden variables using Gibbs sampling in Algorithm \ref{algorithm_RBM_Gibbs_sampling}. Now, we consider the hidden variables of the first RBM as the visible variables for the second RBM (the second pair of layers). Again, this RBM is trained by Algorithm \ref{algorithm_RBM_training_using_contrastive_divergence} and, then, hidden variables are generated using Gibbs sampling in Algorithm \ref{algorithm_RBM_Gibbs_sampling}. This procedure is repeated until all pairs of layers are trained using RBM training. 
This layer-wise training of neural network has a greedy approach \cite{bengio2007greedy}.
This greedy training of layers prepares good initialized weights and biases for the whole neural network. After this initialization, we can fine-tune the weights and biases using backpropagation \cite{rumelhart1986learning}.

The explained training algorithm was first proposed in \cite{hinton2006reducing,hinton2006fast} and was used for dimensionality reduction. 
By increasing $\ell$ to any large number, the network becomes large and deep. As layers are trained one by one as RBM models, we can make the network as deep as we want without being worried for vanishing gradients because weights are initialized well for backpropagation. As this network can get deep and is pre-trained by belief propagation (RBM training), it is referred to as the \textit{Deep Belief Network} (DBN) \cite{hinton2006fast,hinton2009deep}. DBN can be seen as a stack of RBM models. 
The pre-training of a DBN using RBM training is depicted in Fig. \ref{figure_DBN}.
This algorithm is summarized in Algorithm \ref{algorithm_training_deep_belief_network}.
In this algorithm, $\b{W}_l \in \mathbb{R}^{p_l \times p_{l+1}}$ denotes the weights connecting layer $l$ to layer $(l+1)$ and $\b{b}_l \in \mathbb{R}^{p_l}$ denotes the biases of the layer $l$. Note that as weights are between every two layers, the sets of weights are $\{\b{W}_l\}_{l=1}^{\ell-1}$.

Note that pre-training of DBN is an unsupervised task because RBM training is unsupervised. Fine-tuning of DBN can be either unsupervised or supervised depending on the loss function for backpropagation. 
If the DBN is an autoencoder with a low-dimensional middle layer in the network, both its pre-training and fine-tuning stages are unsupervised because the loss function of backpropagation is also a mean squared error. This DBN autoencoder can learn a low-dimensional embedding or representation of data and can be used for dimensionality reduction \cite{hinton2006reducing}. The DBN autoencoder has also been used for hashing \cite{salakhutdinov2009semantic}. The structure of this network is depicted in Fig. \ref{figure_DBN_autoencoder}.

\SetAlCapSkip{0.5em}
\IncMargin{0.8em}
\begin{algorithm2e}[!t]
\DontPrintSemicolon
    \textbf{Input}: training data $\{\b{x}_i\}_{i=1}^n$\;
    // pre-training:\;
    \For{$l$ from $1$ to $\ell-1$}{
        \uIf{$l = 1$}{
            $\{\b{v}_i\}_{i=1}^n \gets \{\b{x}_i\}_{i=1}^n$\;  
        }
        \Else{
            // generate $n$ hidden variables of previous RBM:\;
            $\{\b{h}_i\}_{i=1}^n \gets$ Algorithm \ref{algorithm_RBM_Gibbs_sampling} for $(l-1)$-th RBM $\gets \{\b{v}_i\}_{i=1}^n$\;
            $\{\b{v}_i\}_{i=1}^n \gets \{\b{h}_i\}_{i=1}^n$\;
        }
        $\b{W}_l, \b{b}_l, \b{b}_{l+1} \gets$ Algorithm \ref{algorithm_RBM_training_using_contrastive_divergence} for $l$-th RBM $\gets \{\b{v}_i\}_{i=1}^n$\;    
    }
    // fine-tuning using backpropagation:\;
    Initialize network with weights $\{\b{W}_l\}_{l=1}^{\ell-1}$ and biases $\{\b{b}_l\}_{l=2}^{\ell}$.\;
    $\{\b{W}_l\}_{l=1}^{\ell-1}, \{\b{b}_l\}_{l=1}^{\ell} \gets$ Backpropagate the error of loss fro several epochs.\;
\caption{Training a deep belief network}\label{algorithm_training_deep_belief_network}
\end{algorithm2e}
\DecMargin{0.8em}




\subsection{Other Improvements over RBM and DBN}\label{section_other_improvements_RBM_DBN}

Some improvements over DBN are convolutional DBN \cite{krizhevsky2010convolutional} and use of DBN for hashing \cite{salakhutdinov2009semantic}.
Greedy training of DBN using RBM training has been used in training the t-SNE network for general degree of freedom \cite{van2009learning}. 
In addition to CRBM \cite{taylor2007modeling}, recurrent RBM \cite{sutskever2009recurrent} has been proposed to handle the temporal information of data. 
Also, note that there exist some other energy-based models in addition to BM; Helmholtz machine \cite{dayan1995helmholtz} is an example. 

There also exists Deep Boltzmann Machine (DBM) which is slightly different from DBN. For the sake of brevity, we do not cover it here and refer the interested reader to \cite{salakhutdinov2009deep}.
Various efficient training algorithms have been proposed for DBM \cite{salakhutdinov2010efficient,salakhutdinov2010learning,salakhutdinov2012efficient,srivastava2012multimodal,hinton2012better,montavon2012deep,goodfellow2013multi,srivastava2014multimodal,melchior2016center}. Some of the applications of DBM are document processing \cite{srivastava2013modeling}, face modeling \cite{nhan2015beyond}

\begin{figure}[!t]
\centering
\includegraphics[width=3in]{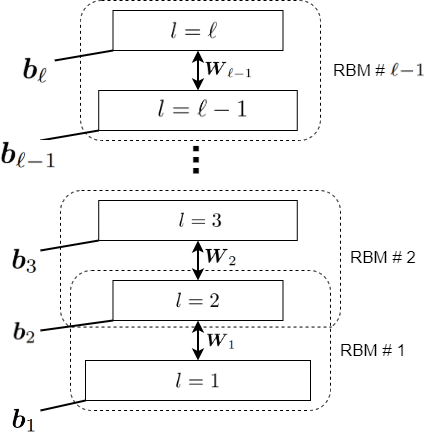}
\caption{Pre-training a deep belief network by considering every pair of layers as an RBM.}
\label{figure_DBN}
\end{figure}

\begin{figure}[!t]
\centering
\includegraphics[width=3in]{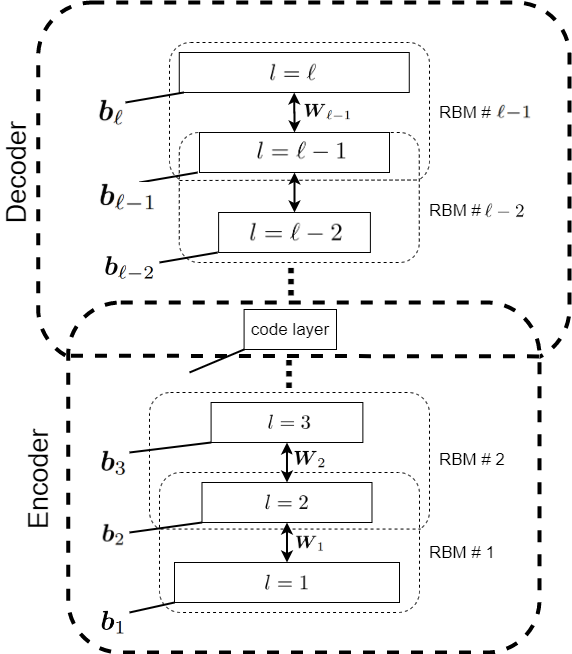}
\caption{A DBN autoencoder where the number of neurons in the corresponding layers of encoder and decoder are usually set to be equal. The coder layer is a low-dimensional embedding for representation of data.}
\label{figure_DBN_autoencoder}
\end{figure}

\section{Conclusion}\label{section_conclusion}

This was a tutorial paper on BM, RBM, and DBN. After some background, we covered the structure of BM and RBM, Gibbs sampling in RBM for generating visible and hidden variables, training RBM using contrastive divergence, and training BM. Then, we introduced various cases for states of visible and hidden units. Thereafter, CRBM and DBN were explained in detail. 

\section*{Acknowledgement}

The authors hugely thank Prof. Mehdi Molkaraie for his course which partly covered some materials on Ising model and statistical physics. 
Some of the materials in this tutorial paper have been covered by Prof. Ali Ghodsi's videos on YouTube. 


\bibliography{References}
\bibliographystyle{icml2016}

\end{document}